\documentclass[final,5p,times,twocolumn]{elsarticle}
\usepackage{amsmath,amssymb,amsfonts}
\usepackage{algorithmic}
\usepackage{graphicx}
\usepackage{textcomp}


\usepackage{xcolor}

\usepackage[hidelinks]{hyperref}

%
%

\usepackage{mathtools}
\usepackage{amssymb}
\usepackage{amsmath}
\usepackage{stmaryrd}
\usepackage{dsfont} 

\newcommand{\iverson}[1]{\mathds{1}\!\left(#1\right)} 


\newcommand{\minOp}{\wedge}
\newcommand{\maxOp}{\vee}
\NewDocumentCommand{\E}{m o}{
    \IfNoValueTF {#2}
        { \ensuremath{\mathbb{E}\left[#1\right] }}
        { \ensuremath{\mathbb{E}\left[#1 \middle| #2\right] }}
}
\NewDocumentCommand{\Prob}{m o}{
    \IfNoValueTF {#2}
        { \ensuremath{\mathbb{P}\left[#1\right] }}
        { \ensuremath{\mathbb{P}\left[#1 \middle| #2\right] }}
}

\newcommand{\T}{\top}

\newcommand{\ReLU}[1]{\left(#1\right)^{+}}

\newcommand\R{\mathbb{R}} 
\newcommand{\normdist}[2]{\mathcal{N}\left(#1,#2\right)} 
\newcommand{\bernoullidist}[1]{\operatorname{Ber}\left(#1\right)} 
\newcommand{\chisquareddist}{\chi^2}

%
%
\newtheorem{theorem}{Theorem}
\newtheorem{lemma}[theorem]{Lemma}
\newdefinition{remark}{Remark}
\newproof{proof}{Proof}
\newdefinition{definition}{Definition}

%
%

\newcommand{\stock}{X}
\newcommand{\stockCrit}{x_{c}}
\newcommand{\tEnd}{T}
\newcommand{\demand}{W}
\newcommand{\demandPredicted}{\bar{\demand}}
\newcommand{\demandMax}{\demand^{\max}}
\newcommand{\demandSupport}{[0,\demandMax]}
\newcommand{\buying}{U}

\newcommand{\shortageRate}{\alpha}

\newcommand{\shortageCum}{E}

\NewDocumentCommand{\cost}{o}{C\IfValueT{#1}{_{#1}}}
\NewDocumentCommand{\costHorizonPredSet}{m o}{\mathcal{\cost}^{#1}\IfValueT{#2}{(\infoVec_{#2})}}
\newcommand{\costHorizon}[1]{\cost^{\tHorizon}_{#1}}
\newcommand{\costHorizonPoint}{\bar{\cost}}
\newcommand{\costHorizonPredLower}[2]{\hat{\cost}^{#2}\!(\infoVec_{#1})}
\newcommand{\costHorizonPredUpper}[2]{\check{\cost}^{#2}\!(\infoVec_{#1})}

\newcommand{\costMax}{\widetilde{\cost}}
\newcommand{\costSupport}{[0,\costMax]}
\newcommand{\costEstimateMissRate}{\beta}

\newcommand{\tHorizon}{H}

\newcommand{\infoVec}{\mathcal{D}}
\newcommand{\sideInfo}{Z}

\newcommand{\policy}{\mu}
\newcommand{\control}[2]{\mu^{#1}\!\left(#2\right)}
\newcommand{\holdingCost}{h}

\newcommand{\demandBase}{\widehat{\demand}}

\newcommand{\pseudoquantile}{q}
\newcommand{\integrator}{g}

\newcommand{\errorBoundFun}{b}
\newcommand{\scoreBound}{g_{\circ}}
\NewDocumentCommand{\errorProcess}{m}{E_{#1}}

\newcommand{\tKnee}{\tEnd_{*}}
\newcommand{\rKnee}{\errorBoundFun_*}

\NewDocumentCommand{\tHistory}{}{B}

\begin{document}
\title{Certified Inventory Control of Critical Resources\tnoteref{t1}}

\author[1]{Ludvig Hult\corref{cor1}}\ead{ludvig.hult@it.uu.se}
\author[1]{Dave Zachariah}\ead{dave.zachariah@it.uu.se}
\author[1]{Petre Stoica}\ead{ps@it.uu.se}
\affiliation[1]{
    organization={Department of Information Technology, Uppsala University},
    addressline={Box 337},
    postcode={751 05},
    city={Uppsala},
    country={Sweden},
    }

\cortext[cor1]{Corresponding author}
\tnotetext[t1]{This work was supported in part by the Swedish Research Council under contract 2021-05022.}

\begin{abstract}
Inventory control is subject to service-level requirements, in which sufficient stock levels must be  maintained despite an unknown demand. We propose a data-driven order policy that certifies any prescribed service level under minimal assumptions on the unknown demand process. The policy achieves this using any online learning method along with integral action. We further propose an inference method that is valid in finite samples. The properties and theoretical guarantees of the method are illustrated using both synthetic and real-world data.
\end{abstract}

\begin{keyword}
inventory control\sep forecasting \sep policy cost inference \sep policy certification
\end{keyword}

\maketitle

%
%
\section{Introduction}

\label{sec:introduction}
Inventory control using discrete-time models is a well-studied problem, where orders of items to hold in stock must anticipate future demand \cite{arrow1958, axsaeter2015}. By defining the costs of insufficient stocks, it is possible to find cost-minimizing policies using dynamic programming \cite{bertsekas_1976_dynamic,bertsekas_1995_dynamic,bensoussan_2011}. In practice, however, maintaining a certain service level of an inventory control system is a greater priority than cost minimization \cite{radasanu2016,bijulal2011}. Under certain restrictive assumptions on the demand process -- such as memoryless and identically distributed demand -- there are explicit formulations of the duality between service levels and costs \cite{vanhoutum2000}. Efforts to  relax such assumptions can be found in  \cite{larson2001, yan2024}. When the unknown demand distribution is learned from data, it is possible to provide probabilistic guarantees on the service level of an order policy in the special case of no stock being held between time periods \cite{huber_2019, levi_2015}.

In this letter, we formulate an inventory control relaxing most assumptions on the demand process and allowing for arbitrary time dependence of this process, while providing order policies with certifiable service-level guarantees. The relaxation is relevant in several critical inventory control problems where service levels are important, such as hospital inventory control \cite{saha2019,bijvank2012}. 

\begin{figure}[!t]
    \centerline{\includegraphics[width=\linewidth]{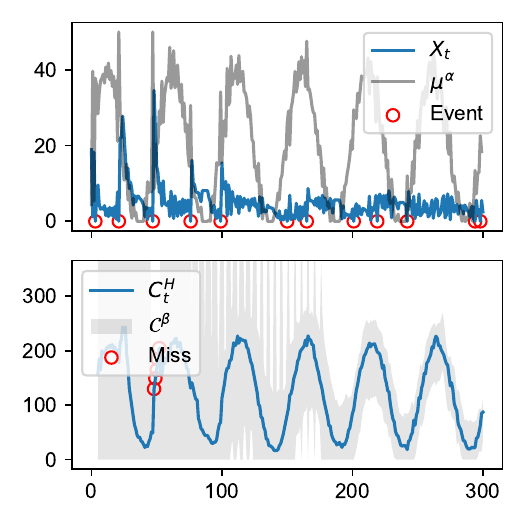}}
    \caption{Inventory control under an unknown (periodic) demand process. Top: Stock level and orders by policy $\policy^\shortageRate$ that is certified to have a service  level of at least $1-\shortageRate=95\%$. The empirical service level is larger than $96\%$. 
    Bottom: Policy operating costs $\costHorizon{t}$ over $\tHorizon=5$ steps ahead and a prediction interval $\costHorizonPredSet{\costEstimateMissRate}$ with a certified coverage level of $1-\costEstimateMissRate=95\%$. 
    For a full experimental description, see Section~\ref{sec:experiment noisey sine}.
    }
    \label{fig:1:main}
\end{figure}

We consider the management of a single item type with stock  level $\stock_t$ at time instant $t$ by determining an \emph{order} of $\buying_t$ additional units. Between $t$ and $t+1$, items in stock are consumed by an \emph{unknown} demand process $\demand_{t}$. 
A \emph{critical stock event} occurs if
\begin{equation}
   \stock_t \leq \stockCrit
\label{eq:defcriticalstockevent}
\end{equation}
for a fixed safety stock $\stockCrit$. (If $\stockCrit=0$, this is also known as a `stockout' event.) When \eqref{eq:defcriticalstockevent} is not satisfied, the normal stock level is insufficient to cover demand.
In this letter, we will describe an \emph{order policy} that guarantees a certain service level by bounding the number of critical stock events over a specified period. At any $t$, the policy has an operating cost, denoted $\costHorizon{t}$, over a future time horizon of length $\tHorizon$. To forecast its costs online, we develop a method for constructing prediction intervals $\costHorizonPredSet{\costEstimateMissRate}$ that will cover $\costHorizon{t}$ at a specified coverage level $1-\costEstimateMissRate$. Fig.~\ref{fig:1:main} provides an illustrative example of the  joint control and cost inference method proposed in this letter.



\textbf{Notation}
We employ the following notation:
$a \minOp b = \min\{a,b\}$, ${a \maxOp b} = \max\{a,b\}$, $\ReLU{x} = \max\{0,x\}$.

%
%
\section{Problem formulation}

The stock of an inventory system follows a discrete-time dynamical equation
\begin{equation}
    \stock_{t+1} = \ReLU{\stock_{t} + \buying_{t} - \demand_{t}},
\label{eq:dynamicalsystem}
\end{equation}
where order $\buying_t \geq 0$ is placed at time instant $t$ and a demand $\demand_t \in [0,\demandMax)$ is subsequently observed. The initial stock is $\stock_0 \geq 0$. For ease of exposition and without loss of generality, we will work with $\stockCrit=0$, so that critical stock events are indicated by $\iverson{\stock_{t} \leq 0}$. The generalization to $\stockCrit>0$ is straight forward.

At any $t$, we have access to historical stock data, demand data, and side information $\sideInfo_t$, collected in the variable $\infoVec_{t} = \{\stock_{0:t}, \demand_{0:t-1}, \sideInfo_{0:t}\}$. An order policy
$$\buying_t = \control{}{\infoVec_t}$$
is the control law of interest. We define its \emph{service level} as the fraction of non-critical stock events over a period $t=1, 2, \dots, \tEnd$. Our primary problem is to construct a policy $\control{\shortageRate}{\cdot}$ that ensures a service level of at least $1-\shortageRate$, that is 
\begin{equation}
    \boxed{
     \frac{1}{\tEnd}\sum_{t=1}^{\tEnd} \iverson{\stock_{t} > 0} \: \geq \: 1 - \shortageRate
     }
     \label{eq:admissible control}
\end{equation}
with a user-defined rate $\shortageRate\in[0,1]$. We call all policies that ensure \eqref{eq:admissible control} \emph{admissible} with a service level $1-\alpha$. 

Since any admissible policy $\policy^\shortageRate_0(\infoVec_t)$ can be replaced by $\policy^\shortageRate(\infoVec_t)= \policy^\shortageRate_0(\infoVec_t) \minOp \left(\demandMax-\stock_t\right)$ and remain admissible, we will assume all admissible policies fulfil $\control{\shortageRate}{\infoVec_t} \leq \demandMax - \stock_t$ without loss of generality. Note that the trivial policy $\control{\shortageRate}{\infoVec_t} \equiv \demandMax - \stock_t$ is also admissible but may hold a wastefully large stock beyond the specified service level. We are therefore seeking admissible policies with lower operating costs, that adapt to the information obtained about the demand process.

Let $\cost[t]$ denote the operating cost of policy at time $t$. Then its future operating cost over time horizon $\tHorizon$ is 
\begin{equation}
     \costHorizon{t} = \sum_{\tau=t}^{t+\tHorizon-1}  \cost[\tau].
     \label{eq:def:cost horizon}
\end{equation}
The choice of cost function is application dependent. One example is the cost of holding stock, e.g., $\cost_t \propto \stock_{t+1}$. The current period cost $\cost[t]$ is available in next period side information $\sideInfo_{t+1}$.
Operating costs depends critically on the demand process $\demand_t$, which is unknown. We therefore seek to construct prediction intervals $\costHorizonPredSet{\costEstimateMissRate}[t]$ that cover the future operating costs $\costHorizon{t}$ with a coverage level $1-\costEstimateMissRate$. That is,
\begin{equation}
    \boxed{\frac{1}{\tEnd-\tHorizon+1}
     \sum_{t=0}^{\tEnd-\tHorizon+1} \iverson{ \costHorizon{t} \in \costHorizonPredSet{\costEstimateMissRate}[t] } \geq 1-\costEstimateMissRate 
     }
     \label{eq:valid cost estimate}
\end{equation}
where $\costEstimateMissRate \in [0,1]$ is a user-defined rate.


%
%
\section{Method}

Let $\demandBase(\infoVec_t)$ be any predictor of the demand. Then the policy
\begin{equation}
    \control{}{\infoVec_t} = \ReLU{ \demandBase(\infoVec_t) - \stock_t }
\label{eq:datadrivenpolicy}
\end{equation}
is data-driven control law in which the orders are proportional to the predicted lack of stock. The demand predictor could assume any form, e.g.,
\begin{equation}
    \demandBase(\infoVec_t) = \phi^\T(\infoVec_t)\theta_t,
\label{eq:demandARXmodel}
\end{equation} 
where $\phi(\infoVec_t) = [1 \; \demand_{t-1} \: \cdots \: \demand_{t-d_{\demand}} \: \stock_{t} \: \cdots \: \stock_{t-d_\stock}]^\T$ is a simple autoregressive exogenous input (ARX) model. The more accurate models use, the less excess orders are made which keeps operating costs low. However, no matter how sophisticated the models are, the resulting policy \eqref{eq:datadrivenpolicy} is not ensured to be admissible under an unknown demand process $\demand_{t}$. 

We will introduce integral action which takes into account the cumulative critical stock events to render \eqref{eq:datadrivenpolicy} admissible with a service level $1-\shortageRate$. After deriving such an admissible order policy, we turn to the problem of inferring its future operation cost $\costHorizon{t}$. The control and inference problems will be tackled by introducing nonlinear gain functions that bound error processes.

Our proof technique is inspired by \cite{angelopoulos_2023}, which develops a one-step-ahead method for time-series prediction. However, the cited paper does not consider interventions and their feedback effects, as is the focus of automatic control, nor the problem of providing multi-step-ahead prediction guarantees for future operating costs of a dynamic interventional policy.

%
%
\subsection{Error Bound Functions}

Let the integer $\errorProcess{t}$ denote an \emph{error process} such that $\errorProcess{0}=0$ and $\errorProcess{t+1} \in \{\errorProcess{t},\errorProcess{t}+1\}$. For instance, it may count the number of critical stock events: $\errorProcess{t} = \sum_{t=1}^{\tEnd} \iverson{\stock_{t} \leq 0} $. We will now seek a function that can bound the error process.

\begin{definition}[Error bound function]
Any nondecreasing function $\errorBoundFun(t)$ that satisfies $0 \leq\errorBoundFun(t) \leq \shortageRate \tEnd$.
\end{definition}
A simple (piece-wise) linear example is
\begin{equation}
    \errorBoundFun(t) = 
    \begin{cases}
    0 & \text{ if } t \leq \tKnee \\
    \rKnee + (\shortageRate \tEnd - \rKnee)\frac{\ReLU{t-\tKnee}}{(\tEnd-\tKnee)} & \text{ else}
    \end{cases}
    \label{eq:defn:flexible error bound fun}
\end{equation}
with parameters $0\leq \rKnee \leq \shortageRate \tEnd $ and $0 \leq \tKnee < \tEnd$. The parameter $\tKnee$ specifies an initial period in which we tolerate no errors, which leads to conservative policies. This is useful if the demand predictor $\demandBase(\infoVec_t)$ requires some burn-in time. After a period $\tKnee$, there is a linear increase of the error bound function $\errorBoundFun(t)$ from $\rKnee$ to $\shortageRate\tEnd$. To determine a suitable integral action in a policy, we consider gain functions associated with $\errorBoundFun(t)$.

\begin{definition}[Associated gain]
An error function $\errorBoundFun(t)$ has an associated gain function $\integrator_t(\errorProcess{})$ if it satisfies
\begin{equation}
    \errorProcess{t}+1 \geq \errorBoundFun(t) \; \Rightarrow \; \integrator_t(\errorProcess{t}) \geq \scoreBound,
\label{eq:bound assumption:saturation}
\end{equation}
where $\scoreBound \in \R \cup \{\infty\}$ is a saturation level.
\end{definition}

\begin{lemma}\label{lemma:bounded}
    Consider any error bound function $\errorBoundFun(t)$ with an associated gain $\integrator_t(\errorProcess{})$. 
    If a saturated gain $\integrator_t(E_t) \geq \scoreBound$ implies no error growth, i.e., $\errorProcess{t+1} =\errorProcess{t}$, then the error process is bounded as
    \begin{equation} 
    \errorProcess{t}\leq \errorBoundFun(t) \leq \alpha \tEnd.
    \end{equation}
\end{lemma}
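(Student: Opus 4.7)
The plan is to argue by induction on $t$, using the two structural facts built into the definitions: (i) the error process can only increase by at most one per step, and (ii) the associated-gain condition together with the "saturated gain implies no growth" hypothesis couples the error dynamics to the bound function $\errorBoundFun(t)$. The conclusion $\errorBoundFun(t)\leq \shortageRate\tEnd$ is then immediate from the definition of an error bound function, so the real content is establishing $\errorProcess{t}\leq \errorBoundFun(t)$ for all $t$.

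For the base case I would simply note that $\errorProcess{0}=0\leq \errorBoundFun(0)$ since $\errorBoundFun$ takes nonnegative values. For the inductive step, suppose $\errorProcess{t}\leq \errorBoundFun(t)$ and consider the two possible transitions given by the definition of an error process. If $\errorProcess{t+1}=\errorProcess{t}$, then by monotonicity of $\errorBoundFun$ we have $\errorProcess{t+1}=\errorProcess{t}\leq \errorBoundFun(t)\leq \errorBoundFun(t+1)$. If instead $\errorProcess{t+1}=\errorProcess{t}+1$, then the growth hypothesis contrapositive gives $\integrator_t(\errorProcess{t})<\scoreBound$; feeding this into the contrapositive of \eqref{eq:bound assumption:saturation} yields $\errorProcess{t}+1 < \errorBoundFun(t)$, hence $\errorProcess{t+1}<\errorBoundFun(t)\leq \errorBoundFun(t+1)$. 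Either way $\errorProcess{t+1}\leq \errorBoundFun(t+1)$, completing the induction.

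The main obstacle, such as it is, is really just book-keeping around contrapositives: one has to recognise that the lemma's hypotheses are being used in their contrapositive forms in the "growth" branch of the induction, and that the unit-jump structure of $\errorProcess{t}$ is what lets us convert the strict inequality $\errorProcess{t}+1<\errorBoundFun(t)$ coming out of the associated-gain condition into a bound on $\errorProcess{t+1}$. Beyond that, the final step $\errorBoundFun(t)\leq \shortageRate\tEnd$ is by definition, so no further work is required.
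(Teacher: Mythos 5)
Your proof is correct and is essentially the paper's own argument: both proceed by induction with a two-case split, the only difference being that you split on the outcome ($\errorProcess{t+1}=\errorProcess{t}$ vs.\ $\errorProcess{t}+1$) and apply the hypotheses in contrapositive form, whereas the paper splits on the condition $\errorProcess{t-1}+1\geq \errorBoundFun(t-1)$ and applies them directly. The two are logically equivalent, so no further comment is needed.
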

\begin{proof}
    The proof is by induction.
    Base case $\shortageCum_0 =0 \leq \errorBoundFun (0)$ holds by nonnegativity of the error bound function.
    Consider the inductive assumption $\shortageCum_{t-1}\leq \errorBoundFun (t-1)$. Two cases arise:
    
    Case i) $\shortageCum_{t-1} +1 \geq \errorBoundFun (t-1)$. 
    Then $\integrator_{t-1}\left( \errorProcess{t-1} \right) \geq \scoreBound$ and $\shortageCum_t = \shortageCum_{t-1} \leq \errorBoundFun (t-1) \leq \errorBoundFun (t)$. 
    
    Case ii) $\shortageCum_{t-1}+1 < \errorBoundFun (t-1)$. 
    Then $\shortageCum_t \leq \shortageCum_{t-1}+1 < \errorBoundFun(t-1)\leq  \errorBoundFun (t)$.

    Finally, $ \errorProcess{t} \leq \alpha \tEnd $ follows by the definition of $\errorProcess{t}$.
\end{proof}

Lemma~\ref{lemma:bounded} justifies the term `error bound function' for $\errorBoundFun(t)$. We will now turn to formulating a nonlinear gain function that adds integral action to the policy so as to render it admissible. The methodology will then be extended to infer the future operating costs.

%
%
\subsection{Admissible policy} \label{sec:admissible controller}

\begin{theorem}\label{thm:controller}
Define the nonlinear gain 
    \begin{equation}
        \integrator_t(\errorProcess{}) = \begin{cases}
            \tan \left( \frac{\pi}{2} \frac{\errorProcess{}+1}{\frac{t}{\tEnd}(\shortageRate\tEnd-2)+2}\right) & \text{ if }  \frac{\errorProcess{}+1}{\frac{t}{\tEnd}(\shortageRate\tEnd-2)+2} \in [0,1] \\
            \infty &  \text{ else} \\ 
            \end{cases}.
            \label{eq:specific integrator 1}
    \end{equation}
Then for any demand predictor $\demandBase(\cdot{})$, the order policy
    \begin{equation}
    \begin{split}
        \control{\shortageRate}{\infoVec_t} &= \ReLU{\demandBase(\infoVec_t) - \stock_{t} + \integrator_{t}\left(\sum_{\tau=1}^{t}\iverson{\stock_{\tau} \leq 0}\right) } \minOp \\ &\qquad  (\demandMax - \stock_{t})
        \end{split}
        \label{eq:proposed control}
    \end{equation}
     is  admissible with a service level $1-\shortageRate$ (see \eqref{eq:admissible control}).
\end{theorem}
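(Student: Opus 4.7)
The plan is to instantiate Lemma~\ref{lemma:bounded} with the error process $\errorProcess{t} = \sum_{\tau=1}^{t} \iverson{\stock_{\tau} \leq 0}$, which counts critical stock events and satisfies $\errorProcess{0}=0$ and $\errorProcess{t+1}\in\{\errorProcess{t}, \errorProcess{t}+1\}$ by construction. The target inequality \eqref{eq:admissible control} is just a restatement of $\errorProcess{\tEnd} \leq \shortageRate \tEnd$, which the Lemma delivers as soon as we exhibit an error bound function $\errorBoundFun(t)$ whose associated gain coincides with the $\integrator_t$ in \eqref{eq:specific integrator 1}, and we verify that the saturated branch of $\integrator_t$ forces the policy to suppress the next error.

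Reading off the denominator in \eqref{eq:specific integrator 1}, I would take
\[
\errorBoundFun(t) = \tfrac{t}{\tEnd}(\shortageRate \tEnd - 2) + 2,
\]
which is nondecreasing and satisfies $\errorBoundFun(0)=2$, $\errorBoundFun(\tEnd)=\shortageRate\tEnd$, and $0 \leq \errorBoundFun(t) \leq \shortageRate\tEnd$ in the nontrivial regime $\shortageRate \tEnd \geq 2$. I choose saturation level $\scoreBound = +\infty$. If $\errorProcess{}+1 \geq \errorBoundFun(t)$, then $\frac{\errorProcess{}+1}{\errorBoundFun(t)} \geq 1$: either the ratio exceeds $1$ and the explicit ``else'' branch of \eqref{eq:specific integrator 1} returns $+\infty$, or the ratio equals $1$ and the $\tan$ branch evaluates at $\pi/2$, which is again $+\infty$. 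In both cases $\integrator_t(\errorProcess{}) \geq \scoreBound$, so $\integrator_t$ qualifies as an associated gain for $\errorBoundFun(t)$ in the sense of \eqref{eq:bound assumption:saturation}.

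Next, I verify that a saturated gain forces $\errorProcess{t+1}=\errorProcess{t}$. When $\integrator_t(\errorProcess{t}) = +\infty$, the argument of the outer $\ReLU{\cdot}$ in \eqref{eq:proposed control} diverges, so the $\minOp (\demandMax - \stock_t)$ cap activates and the order becomes $\buying_t = \demandMax - \stock_t$. The dynamics \eqref{eq:dynamicalsystem} then give $\stock_{t+1} = \ReLU{\demandMax - \demand_t} = \demandMax - \demand_t > 0$, since by assumption $\demand_t \in [0,\demandMax)$. Hence $\iverson{\stock_{t+1}\leq 0}=0$ and $\errorProcess{t+1}=\errorProcess{t}$, as required. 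Notably, this step is insensitive to the choice of demand predictor $\demandBase(\cdot)$, which is the whole point of the integral action.

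Applying Lemma~\ref{lemma:bounded} then yields $\errorProcess{\tEnd} \leq \errorBoundFun(\tEnd) = \shortageRate \tEnd$, which is \eqref{eq:admissible control}. The only real obstacle is the first step, reverse-engineering an $\errorBoundFun(t)$ compatible with the $\tan$-based gain; once the denominator of the $\tan$ argument is read as $\errorBoundFun(t)$ and one observes that it is the $\demandMax$-cap together with the strict bound $\demand_t < \demandMax$ that ``cashes in'' the infinite gain, the remaining verifications reduce to a direct application of the Lemma.
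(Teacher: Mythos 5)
Your proof is correct and follows essentially the same route as the paper: identify $\errorBoundFun(t)=\frac{t}{\tEnd}(\shortageRate\tEnd-2)+2$ (the paper's \eqref{eq:defn:flexible error bound fun} with $\tKnee=0$, $\rKnee=2$), verify that \eqref{eq:specific integrator 1} is its associated gain with $\scoreBound=\infty$, show that saturation forces $\buying_t=\demandMax-\stock_t$ and hence $\stock_{t+1}=\demandMax-\demand_t>0$ since $\demand_t<\demandMax$, and invoke Lemma~\ref{lemma:bounded}. The only difference is presentational — the paper routes the no-error-growth step through an auxiliary quantity $\demandPredicted_t$ rather than arguing directly on the dynamics as you do — and you are in fact slightly more explicit about the implicit requirement $\shortageRate\tEnd\geq 2$.
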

\begin{proof}
By considering the error bound function $\errorBoundFun(t)$ in \eqref{eq:defn:flexible error bound fun} with $\tKnee=0$ and $\rKnee=2$, the gain function \eqref{eq:specific integrator 1} takes the form 
    \begin{equation}
        \integrator_t(\errorProcess{}) = \begin{cases}
            \tan \left( \frac{\pi}{2} \frac{\errorProcess{}+1}{\errorBoundFun(t)}\right) & \text{ if }  \errorProcess{}+1 \in [0,\errorBoundFun(t)] \\
            \infty &  \text{ else} \\ 
            \end{cases}.
    \end{equation}
This form shows that the gain is the associated gain for an error bound function  with saturation level $\scoreBound=\infty$ in \eqref{eq:bound assumption:saturation}.

The policy in \eqref{eq:proposed control} can be expressed as
     \begin{equation}\control{\shortageRate}{\infoVec_t} = \ReLU{ \demandPredicted_{t} - \stock_t }
     \label{eq:def:general policy}
     \end{equation}
where
$\demandPredicted_{t} = \inf \{ w \in \demandSupport : w - \demandBase(\infoVec_t) > \integrator_{t}\left( \errorProcess{t} \right)\}$
and we note that $\integrator_{t}\left( \errorProcess{t} \right) \geq \scoreBound  \Rightarrow \demandPredicted_{t}=\demandMax$. Using \eqref{eq:dynamicalsystem}, we can express the error process as:
    \begin{align}
        \errorProcess{t} &= \sum_{k=1}^{t} \iverson{\stock_{k} \leq 0}
        \label{eq:errorcum} \\
        &=\sum_{\tau=0}^{t-1} \iverson{ \demandPredicted_{\tau} \maxOp \stock_{\tau} \leq \demand_{\tau} }, \label{eq:cost specific error process}
    \end{align}
    showing that  $\demandPredicted_{t}=\demandMax \Rightarrow \errorProcess{t+1} = \errorProcess{t}$ (no error growth). Using Lemma~\ref{lemma:bounded}, it follows that $\errorProcess{\tEnd} \leq \shortageRate \tEnd$ and therefore the policy is admissible.
\end{proof}

%
%
\subsection{Valid cost inference}
We now turn to forecasting the operating costs \eqref{eq:def:cost horizon} of an admissible policy.

Similarly to the admissible policy construction that starts with a nominal predictor, the valid cost inference method starts with any  nominal prediction interval $\left[ \costHorizonPredLower{t}{\costEstimateMissRate}  , \costHorizonPredUpper{t}{\costEstimateMissRate} \right]$. To make things concrete, consider nominal intervals produced by a quantile estimator. Let $\costHorizonPoint(\infoVec_{\tau})$ be a simple point predictor of the cost and define its empirical error as $\epsilon_{\tau} = \costHorizon{\tau} - \costHorizonPoint(\infoVec_{\tau})$. Then a quantile estimator is given by:
\begin{equation}
    \costHorizonPoint_{a}(\infoVec_{t}) = \costHorizonPoint(\infoVec_{t}) + \inf\left\{ p : \frac{1}{t-\tHorizon+1}\sum_{\tau=0}^{t-\tHorizon} \iverson{\epsilon_\tau \leq p}   \geq a \right\}. \label{eq:sample average approx}
\end{equation}
and using it we can define a nominal prediction interval as
\begin{equation}
\left[ \costHorizonPredLower{t}{\costEstimateMissRate}  , \costHorizonPredUpper{t}{\costEstimateMissRate} \right] =
\left[ \costHorizonPoint_{\frac{\costEstimateMissRate}{2}}(\infoVec_{t})  , \costHorizonPoint_{1-\frac{\costEstimateMissRate}{2}}(\infoVec_{t}) \right].
    \label{eq:base interval model SAA}
\end{equation}
This nominal interval will be adjusted by introducing a gain as we show next.



\begin{theorem}\label{thm:costinference}
Assume non-negative bounded cost over a finite horizon, $\costHorizon{t} \in [0,\costMax] $, where the upper bound may be infinity.
Consider a horizon $\tHorizon \geq 2$ and any nominal prediction interval $[\costHorizonPredLower{t}{\costEstimateMissRate}, \costHorizonPredUpper{t}{\costEstimateMissRate}]$. Define an adjusted prediction interval
    \begin{equation}
        \costHorizonPredSet{\costEstimateMissRate}[t] =
        \left[ \ReLU{\costHorizonPredLower{t}{\costEstimateMissRate} -\pseudoquantile_{t} } , \left(\costHorizonPredUpper{t}{\costEstimateMissRate}+\pseudoquantile_{t} \right)\minOp \costMax \right],
        \label{eq:simple prediction interval}
    \end{equation}
where
    \begin{equation}\begin{split}
        \pseudoquantile_{t} = 
             \integrator_{t}\left( 
              \sum_{\tau=t-\tHorizon+1}^{t-1}  \iverson{\costHorizonPredSet{\costEstimateMissRate}[\tau] \neq \costSupport}  \right. \\
            + \left.\sum_{\tau=0}^{t-\tHorizon} \iverson{\costHorizon{\tau} \not \in \costHorizonPredSet{\costEstimateMissRate}[\tau]}
            \right)
    \end{split}\end{equation}
     and 
    \begin{equation}
        \integrator_t(\errorProcess{} ) = \begin{cases}
            \tan \left( \frac{\pi}{2} (2\frac{\errorProcess{} +1}{\errorBoundFun(t)}-1) \right) & \text{if } 0 < \errorBoundFun(t) \text{ and } E+1\leq \errorBoundFun(t) \\
            \infty & \text{else}\\ 
            \end{cases}
            \label{eq:specific integrator 2}
    \end{equation}
  is a gain associated with the error bound function $\errorBoundFun(t)$ in \eqref{eq:defn:flexible error bound fun} using $\costEstimateMissRate$ in lieu of $\shortageRate$. Then $ \costHorizonPredSet{\costEstimateMissRate}[t]$ has a coverage level $1-\costEstimateMissRate$ according to \eqref{eq:valid cost estimate}.
\end{theorem}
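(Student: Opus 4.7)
The plan is to mirror the proof of Theorem~\ref{thm:controller}: identify a suitable error process whose value saturates $\integrator_t$ precisely when the adjusted interval collapses to the trivial full-support set $\costSupport$, then run an induction in the spirit of Lemma~\ref{lemma:bounded} to control the observable miscoverage count.

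First I would analyze the saturation branch. When $q_t = \infty$, substitution into \eqref{eq:simple prediction interval} drives the lower endpoint $\ReLU{\costHorizonPredLower{t}{\costEstimateMissRate} - q_t}$ to $0$ and the upper endpoint $(\costHorizonPredUpper{t}{\costEstimateMissRate} + q_t) \minOp \costMax$ to $\costMax$, so $\costHorizonPredSet{\costEstimateMissRate}[t] = \costSupport$. Because $\costHorizon{t} \in \costSupport$ by assumption, this trivial interval necessarily covers $\costHorizon{t}$, and the indicator $\iverson{\costHorizonPredSet{\costEstimateMissRate}[t] \neq \costSupport}$ appearing in the argument of $\integrator_t$ vanishes. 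Letting $E_t$ denote that full argument, the update from $t$ to $t+1$ adds $\iverson{\costHorizonPredSet{\costEstimateMissRate}[t] \neq \costSupport} \in \{0,1\}$ to the first sum and migrates the term at $\tau = t - \tHorizon + 1$ out of the first sum into the second; since $\costHorizon{\tau} \in \costSupport$ implies that miscoverage requires a non-trivial interval, the migration is a non-positive net change, and therefore $E_{t+1} \leq E_t + \iverson{\costHorizonPredSet{\costEstimateMissRate}[t] \neq \costSupport}$.

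With these two ingredients, the induction proceeds as in Lemma~\ref{lemma:bounded}. Base case: $E_0 = 0 \leq \errorBoundFun(0)$. Inductive step: if $E_t + 1 \leq \errorBoundFun(t)$, then $E_{t+1} \leq E_t + 1 \leq \errorBoundFun(t+1)$; otherwise \eqref{eq:specific integrator 2} forces $\integrator_t(E_t) = \infty$, the saturation analysis forces the new indicator to $0$, and $E_{t+1} \leq E_t \leq \errorBoundFun(t+1)$. Taking $t = \tEnd$ and noting that the first sum in $E_\tEnd$ is non-negative, the observable miscoverage count satisfies $\sum_{\tau=0}^{\tEnd - \tHorizon} \iverson{\costHorizon{\tau} \not\in \costHorizonPredSet{\costEstimateMissRate}[\tau]} \leq E_\tEnd \leq \costEstimateMissRate \tEnd$, which yields \eqref{eq:valid cost estimate}.

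The main obstacle is the $\tHorizon$-step observation lag: at time $t$ the miscoverage indicators for the last $\tHorizon - 1$ steps cannot yet be evaluated, so $\integrator_t$ must react to the surrogate $\iverson{\costHorizonPredSet{\costEstimateMissRate}[\tau] \neq \costSupport}$ for those steps. Verifying that this surrogate dominates the eventual miscoverage indicator \emph{and} that its migration from the first to the second sum keeps the per-step increment of $E_t$ bounded above by $1$ is the delicate bookkeeping in the definition of $q_t$; once checked, the reduction to the integral-action invariant used for the service-level guarantee is immediate.
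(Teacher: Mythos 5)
Your proposal follows essentially the same route as the paper's proof: the same error process $\errorProcess{t}$ combining observed miscoverage events with the surrogate indicator $\iverson{\costHorizonPredSet{\costEstimateMissRate}[\tau]\neq\costSupport}$ for the $\tHorizon-1$ not-yet-observable steps, the same saturation argument ($\pseudoquantile_t=\infty$ forces the trivial interval, hence covers and contributes no increment), and the same reduction to Lemma~\ref{lemma:bounded}; your explicit ``migration'' bookkeeping showing $\errorProcess{t+1}\leq\errorProcess{t}+\iverson{\costHorizonPredSet{\costEstimateMissRate}[t]\neq\costSupport}$ is precisely the ``direct verification'' the paper leaves implicit. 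The one discrepancy is the final normalization: you bound the miscoverage count by $\costEstimateMissRate\tEnd$, whereas \eqref{eq:valid cost estimate} normalizes by $\tEnd-\tHorizon+1$, so the argument should be run with the rescaled horizon $\widetilde{\tEnd}=\tEnd-\tHorizon+1$ (ensuring $\errorBoundFun(\widetilde{\tEnd})\leq\costEstimateMissRate\widetilde{\tEnd}$), as the paper does.
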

\begin{remark}
A similar result holds for the special case of $\tHorizon =1$, but is omitted here for sake of brevity.
\end{remark}
\begin{remark}
For $\errorBoundFun(t)$ we use $\rKnee=\tHorizon$ as a default value and the burn-in period $\tKnee$ is problem dependent.
\end{remark}

\begin{proof}
 Define
    \begin{equation}
    \begin{split}
           \costHorizonPredSet{\costEstimateMissRate}[t] &= \{ v \in \costSupport : \ReLU{v-\costHorizonPredUpper{t}{\costEstimateMissRate}} \\&+ \ReLU{ \costHorizonPredLower{t}{\costEstimateMissRate} - v} \leq \pseudoquantile_{t}\}
    \end{split}
    \end{equation}

    Let the number of observed miscoverage events at time $t$ be define as $\widetilde{E}_t = \sum_{\tau=0}^{t-\tHorizon} \iverson{ \costHorizon{\tau} \not\in \costHorizonPredSet{\costEstimateMissRate}[\tau] }$ that we want to bound by $\costEstimateMissRate \widetilde{\tEnd} $ where $\widetilde{\tEnd}  =  \tEnd - \tHorizon + 1$.
    Next, define the error process to be the number of known as well as potential miscoverage events:
    \begin{equation}
        \errorProcess{t} 
            =  \widetilde{E}_{t}
            + \sum_{\tau=t+\tHorizon+1}^{t-1}  \iverson{\costHorizonPredSet{\costEstimateMissRate}[\tau] \neq \costSupport}
    \end{equation}
    so that $\pseudoquantile_{t} = \integrator_t(\errorProcess{t})$. Let $\scoreBound = \costMax$ be the saturation level. Then a direct verification shows that $ \integrator_t(\errorProcess{t}) \geq \scoreBound \Rightarrow \errorProcess{t+1} = \errorProcess{t}$.
    We can therefore use Lemma~\ref{lemma:bounded} to show that $\errorProcess{\widetilde{\tEnd}} \leq \costEstimateMissRate \widetilde{\tEnd}$.
    Since $\widetilde{E}_{t} \leq \errorProcess{t}$, we have also proven that
    \begin{equation}
       \widetilde{E}_{\widetilde{\tEnd}} \leq  \errorProcess{\widetilde{\tEnd}}  \leq \errorBoundFun(\widetilde{\tEnd}) \leq \costEstimateMissRate \widetilde{\tEnd}
    \end{equation}
    and therefore that $\costHorizonPredSet{\costEstimateMissRate}[t] $ has a coverage level $1-\costEstimateMissRate$.
\end{proof}


%
%
\section{Numerical experiments}

The proposed order policy, and the online prediction interval for its future operational costs, are illustrated in a series of simulations using synthetic and real data. In all cases, we fix the prescribed service levels to $1-\shortageRate=95\%$ and  the coverage level to $1-\costEstimateMissRate=95\%$. 

The operating cost is taken to be cost of purchase plus holding; for a holding cost $h>0$, $\cost[t] = \buying_t + \holdingCost \stock_t$. A direct verification shows the finite horizon operating cost is upper bounded by $\costHorizon{t} = \tHorizon\demandMax(1+\holdingCost)$. For simplicity, we use $\holdingCost=1$.

The nominal demand predictor $\demandBase(\cdot)$ used is a linear-in-parameters autoregressive model \eqref{eq:demandARXmodel}, where parameters $\theta_t$ are tracked by recursive least squares (RLS) \cite[eq.~9.12]{soderstrom2001}. To reduce the burn-in time, the RLS parameters are initialized by tracking on historical data $\demand_{-\tHistory:0}$, $\buying_{-\tHistory:0}$, $\stock_{-\tHistory:0}$ recorded under an alternative policy, namely the empirical $(1-\shortageRate)$-quantile of demand; $\control{\shortageRate}{\infoVec_t} = \inf\left\{ p \, :\, \sum_{\tau = -\tHistory}^{t-1} \iverson{\demand_{\tau} \leq p}  \geq (1-\shortageRate)(\tHistory-t+1) \right\}$.

To construct the nominal prediction interval  \eqref{eq:base interval model SAA}, we use another linear-in-parameters autoregressive model with parameters tracked by RLS,
\begin{equation*}
\costHorizonPoint(\infoVec_{t}) =\varphi^\T(\infoVec_t) \vartheta_t.
\end{equation*}
This model predicts the future costs of the specific policy, and there are no data on this policy for $t\leq 0$. Therefore, no pre-training is done. The parameters are initialized with $\widehat{\vartheta}_0=[\costMax/2, 0  \dots{},0]$ and the first element $\varphi{}(\infoVec_t)$ is set to $1$, which yields a   conservative initial prediction.

\subsection{Synthetic data}
We explore the order policy and cost inference methods on three rather different demand processes.
All simulations on synthetic data use $\tEnd=300$, $\tHorizon=10$ and $\tHistory=150$, and the demand is upper bounded by $\demandMax=50$.

For demand prediction, we use model orders $d_\stock=d_\demand=2$ in \eqref{eq:demandARXmodel}, and set the RLS forgetting factor to $\lambda=0.99$.
For cost inference, we use the feature vector of a linear AR-5 model for $\tHorizon$ steps ahead prediction $\varphi(\infoVec_t) = \begin{bmatrix} 1 & \costHorizon{t-\tHorizon-4} & \costHorizon{t-\tHorizon-3} & \dots & \costHorizon{t-\tHorizon}\end{bmatrix}^\T$.

\subsubsection{Periodic demand}
The unknown demand process is assumed to follow
\label{sec:experiment noisey sine}
\begin{equation}
    W_t = 20  + 20\sin ( 2 \pi t/50)+ e_t
\end{equation}
where $e_t \sim \normdist{0}{1}$ and the demand is clipped to the interval $[0,50]$. This captures seasonality with random shocks.
In the cost inference, we use the RLS forgetting factor $\lambda=0.99$ and set a burn-in time of $\tKnee=40$ samples. 
Fig.~\ref{fig:1:main} shows the periodic orders resulting from the policy. We also see that the cost inference starts to become informative around $t=150$.

\subsubsection{Spiking demand} The unknown demand for items assumed to be proportional to the size of a latent infected population $I_t$,
\[W_t = 50I_t, \]
which models episodic demand that arises from infectious decreases.
Specifically, $I_t$ follows a stochastic `Susceptible-Infected-Removed' or SIR-type model:
\begin{equation}
\begin{split}
S_t &= S'_{t-1} - 0.5 S'_{t-1} I'_{t-1}\\
I_t &= I'_{t-1} + 0.5 S'_{t-1} I'_{t-1} - 0.2 I'_{t-1} \\
R_t &= (1 - e_t)R_{t-1} + 0.2 (I_{t-1} + 0.001 e_t)
\end{split}
\label{eq:SIRmodel}
\end{equation}
In \eqref{eq:SIRmodel}, we have that
\begin{align*}
S'_{t-1} &= S_{t-1} + (R_{t-1} - 0.001)e_t\\
I'_{t-1} &= I_{t-1} + 0.001 e_t 
\end{align*}
and $e_t \sim \bernoullidist{0.03}$. In each time step, with 3\% probability, the population loses immunity and 0.1\% of the population gets infected.

In the cost inference, we use the RLS forgetting factor $\lambda=0.995$ and set a burn-in time of $\tKnee=50$ samples. 
Fig.~\ref{fig:2:main} shows a spiking order pattern resulting from the policy. Note how the orders drop to zero during a longer period only to shoot up at the end.
\begin{figure}[!t]
    \centerline{\includegraphics[width=\linewidth]{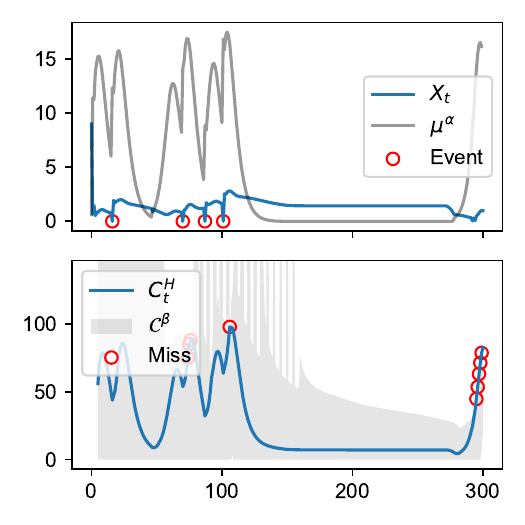}}
    \caption{Stock, purchase and cost estimate for the SIR model.
    The empirical service level is $98.7\%$ exceeding the prescribed $95\% = 1-\shortageRate$.
    The cost prediction interval covers the true cost in $97\%$ of cases, exceeding the prescribed $95\% = 1-\costEstimateMissRate$.
    }
    \label{fig:2:main}
\end{figure}

\subsubsection{Feedback demand} The unknown demand is assumed to depend on the stock level and therefore introduces feedback into system. Specifically, the demand process is
\begin{equation}
    \demand_{t+1} = (5 + \stock_{t} + e_t) \maxOp 49.999,
\end{equation}
where $e_t \sim \chisquareddist_1$.
In the cost inference, we use $\lambda=0.95$ and set the burn-in period to $\tKnee=30$ samples. 
Fig.~\ref{fig:3:main} shows a considerable tightening of the cost inference tightens considerably after $t=150$. 
\begin{figure}[t]
    \centerline{\includegraphics[width=\linewidth]{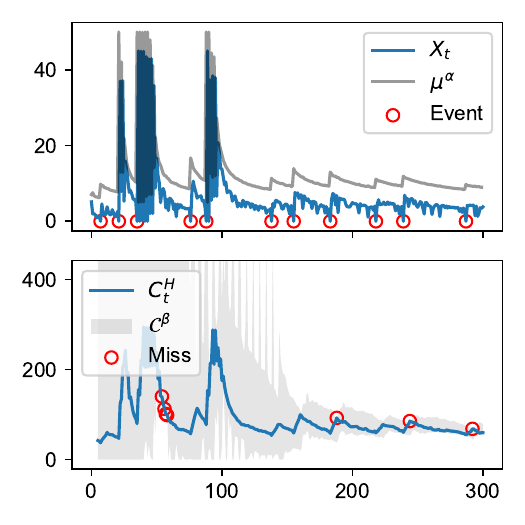}}
    \caption{Stock, purchase and cost estimate for the feedback demand model.
    The empirical service level is $96.3\%$ exceeding the prescribed $95\% = 1-\shortageRate$.
    The cost prediction interval covers the true cost in $97.7\%$ of cases, exceeding the prescribed $95\% = 1-\costEstimateMissRate$.
    }
    \label{fig:3:main}
\end{figure}

\subsection{Electricity dataset}

The previous results validate the guarantees of the order policy and cost inference methodology for rather different synthetic demand processes. As an example of a real-world demand process, we now use the electricity demand variable $\demand_t$ (NSWDemand) in the Elec2 dataset\cite{harries1999}. This process represents seasonality and shocks as well as other distribution shifts. It is therefore both interesting and challenging to analyze.
The full dataset records the demand every 30 minutes (48 samples per day) for ca 900 days,
normalized so that $\demandMax =1$.
We set the cost horizon to $\tHorizon=1\times{}48$ corresponding to 1 day. The pretraining window length to 3 days ($\tHistory=3\times{}48$) and algorithm is run over 12 weeks of time periods ($\tEnd = 12\times7\times48$).
The first $\tHistory+\tEnd$ datapoints are used for selecting suitable hyperparameters. The algorithm is then run on the subsequent $\tHistory+\tEnd$ samples.

Concerning hyperparameter selection, we chose demand model orders $d_\stock=0$, $d_\demand=48$ in \eqref{eq:demandARXmodel}, and set the RLS with forgetting factor to $\lambda=0.99$.
The cost inference feature map $\varphi$ was set to include an autoregressive model of order 24 with additional Fourier coefficients representing periods of 3, 6, 12, 24 hours, and 7 days. The burn-in time was $\tKnee=10\times 48$ and the forgetting factor was $\lambda=0.995$.

Fig.~\ref{fig:elec2:main} shows strong periodic variations as well as sudden spikes in the orders. Despite significant variability in demand, the order policy ensures the prescribed service level $1-\shortageRate$. We also note that the cost inference tightens as in the previous case, but around $t=3000$  demand fluctuations lead to several miscoverage events. These inflate the prediction intervals, which subsequently  shrink again.
\begin{figure}[!t]
    \centerline{\includegraphics[width=\linewidth]{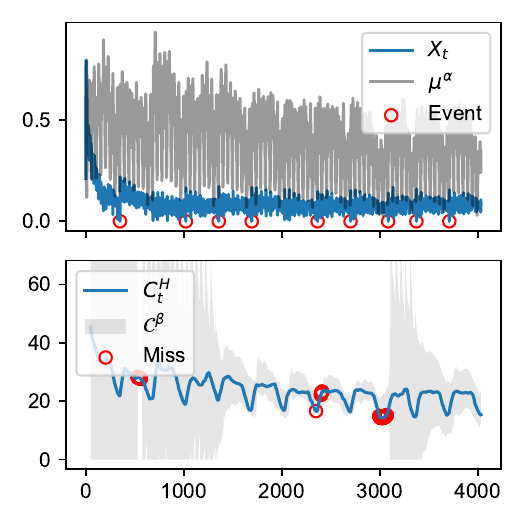}}
    \caption{Stock, purchase and cost estimates using the demand recorded in the Elec2 dataset (and is ormalized to [0,1].)
    The empirical service level is $99.8\%$ exceeding the prescribed $95\% = 1-\shortageRate$.
    The cost prediction interval covers the true cost in $97\%$ of cases, exceeding the prescribed $95\% = 1-\costEstimateMissRate$.
    }
    \label{fig:elec2:main}
\end{figure}

We use this example to illustrate in Fig.~\ref{fig:elec2:errors} the error processes $\errorProcess{t}$ associated with the order policy and cost inference, respectively. These processes are bounded by $\errorBoundFun(t)$, which determines the associated gains. 
\begin{figure}[!t]
    \centerline{\includegraphics[width=\linewidth]{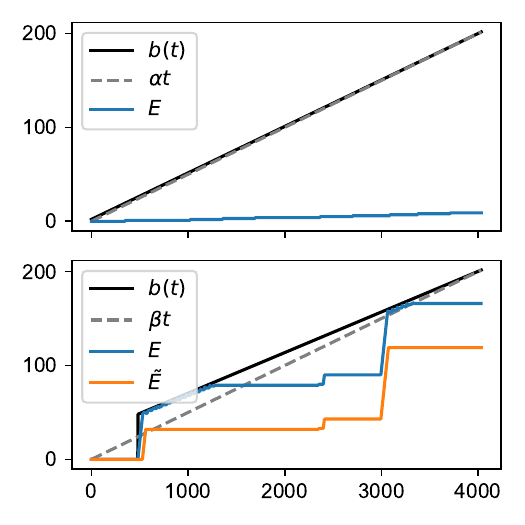}}
    \caption{Error processes $\errorProcess{t}$ for experiments on the Elec2 dataset, which are provably bounded by $\errorBoundFun(t)$ in \eqref{eq:defn:flexible error bound fun} with parameters $\tKnee$, $\rKnee$ and $\shortageRate$.
    Top: $\errorProcess{t}$ is the number of critical stock events. The default $\rKnee=2$  $\tKnee=0$ are used.
    Bottom: $\errorProcess{t}$ is the number of observed ($\widetilde{E}_t$) plus potential miscoverage events in the cost inference. The default $\rKnee=\tHorizon$ and $\shortageRate=\costEstimateMissRate$.}
    \label{fig:elec2:errors}
\end{figure}

\section{Discussion}

We have considered an inventory control problem with an unknown demand process in which order policies can be constructed with certified service levels and their operating cost can be forecast with a prescribed coverage level. These guarantees hold under very weak assumptions on the unknown demand process. We have verified the methods numerically and illustrated them on synthetic and real data.

The method builds on any nominal predictors of the demand and operating cost, which are adjusted using a nonlinear integration of errors made in the policy or cost inference, respectively. Several design choices are in fact possible here.  In the simulation studies we used classical linear autoregressive models, but the use of more sophisticated predictors, such neural networks, is rather straightforward.

The error bound function and the associated gain function of choice affect the conservativeness of the method. In our experiments, we used the error bound function \eqref{eq:defn:flexible error bound fun} with a simple parametrization, where we adjust the burn in time $\tKnee$ reflecting the fact that different models need more data to predict well. More complicated error bound functions are conceivable, which depend not only on time but also on the accuracy of the nominal predictor models.


Further exploratory work may study the impact of the choice of prediction models, as well as error bound functions and associated gains, on the operating costs and the tightness in cost inference. This may lead to finding more efficient admissible policies and cost inferences.

\section{CRediT authorship contribution statement}
\textbf{Ludvig Hult:} Conceptualization, Formal Analysis, Investigation, Software,

\textbf{Dave Zachariah:} Funding acquisition, Writing – review \& editing, Conceptualization, Supervision

\textbf{Petre Stoica:} Writing – review \& editing

\bibliographystyle{IEEEtran}
\bibliography{main.bib}

\end{document}